\newcommand{\vect}[1]{\mathbf{#1}}
\newcommand{\mat}[1]{\mathbf{#1}}
\newcommand{\diffs}[3]{\frac{\partial^2 #1}{
\ifx#2#3 
\partial #2^2
\else
\partial #2 \partial #3
\fi
}}
\DeclareMathOperator*{\argmin}{arg\,min}
\newcommand{\av}{\vect{a}}
\newcommand{\bv}{\vect{b}}
\newcommand{\ev}{\vect{e}}
\newcommand{\fv}{\vect{f}}
\newcommand{\gv}{\vect{g}}
\newcommand{\hv}{\vect{h}}
\newcommand{\kv}{\vect{k}}
\newcommand{\nv}{\vect{n}}
\newcommand{\rhov}{\mathbf{\rho}}
\newcommand{\rv}{{\vect{r}}}
\newcommand{\uv}{\vect{u}}
\newcommand{\vv}{\vect{v}}
\newcommand{\xv}{\vect{x}}
\newcommand{\yv}{\vect{y}}
\newcommand{\Am}{\mat{A}}
\newcommand{\Gm}{\mat{G}}
\newcommand{\Lm}{\mat{L}}
\newcommand{\Wm}{\mat{W}}
\theoremstyle{plain}
\newtheorem{thm}{Theorem}
\newtheorem{lem}{Lemma}
\newtheorem{assumpt}{Assumption}
\newtheorem{problem}{Problem}
\newtheorem{rem}{\textbf{Remark}}[section]
\title{\LARGE \bf
Unified Feedback Linearization for Nonlinear Systems with\\ Dexterous and Energy-Saving Modes}
\author{Mirko Mizzoni$^1$\orcidlink{0009-0006-2165-3475}, Pieter van Goor$^1$\orcidlink{0000-0003-4391-7014}, and Antonio Franchi$^{1,2}$\orcidlink{0000-0002-5670-1282}
\thanks{$^1$Robotics and Mechatronics group, Faculty of Electrical Engineering,  Mathematics, and Computer Science (EEMCS), University of Twente, 7500 AE Enschede, The Netherlands. {\footnotesize \tt m.mizzoni@utwente.nl}, {\footnotesize \tt p.c.h.vangoor@utwente.nl}, {\footnotesize \tt a.franchi@utwente.nl}}
\thanks{$^2$Department of Computer, Control and Management Engineering, Sapienza University of Rome, 00185 Rome, Italy, {\footnotesize \tt antonio.franchi@uniroma1.it}}\thanks{This work was partially funded by the Horizon Europe research agreement no. 101120732 (AUTOASSESS).}}
\begin{document}

\maketitle
\thispagestyle{empty}
\pagestyle{empty}


\begin{abstract}
Systems with a high number of inputs compared to the degrees of freedom (e.g. a mobile robot with Mecanum wheels) often have a minimal set of \textit{energy-efficient} inputs needed to achieve a \textit{main task} (e.g. position tracking) and a set of \text{energy-intense} inputs needed to achieve an additional \textit{auxiliary task} (e.g. orientation tracking). 
This letter presents a unified control scheme, derived through feedback linearization, that can switch between two modes: 
an \textit{energy-saving mode}, which tracks the main task using only the energy-efficient inputs while forcing the \textit{energy-intense} inputs to zero, and a \textit{dexterous mode}, which also uses the energy-intense inputs to track the auxiliary task as needed.
The proposed control guarantees the exponential tracking of the main task and that the dynamics associated with the main task evolve independently of the a priori unknown switching signal.
When the control is operating in dexterous mode, the exponential tracking of the auxiliary task is also guaranteed.
Numerical simulations on an omnidirectional Mecanum wheel robot validate the effectiveness of the proposed approach and demonstrate the effect of the switching signal on the exponential tracking behavior of the main and auxiliary tasks.
\end{abstract}

\section{Introduction}
In many mechatronics systems, for example robotic or transportation systems involving high-dimensional and omnidirectional motion~\cite{TAHERI2020103958}, the full control input is not always necessary to complete a task. 
\begin{figure}[t]
    \centering
\includegraphics[trim={3.9cm 3.6cm 2.8cm 2.8cm},clip,scale=0.35]{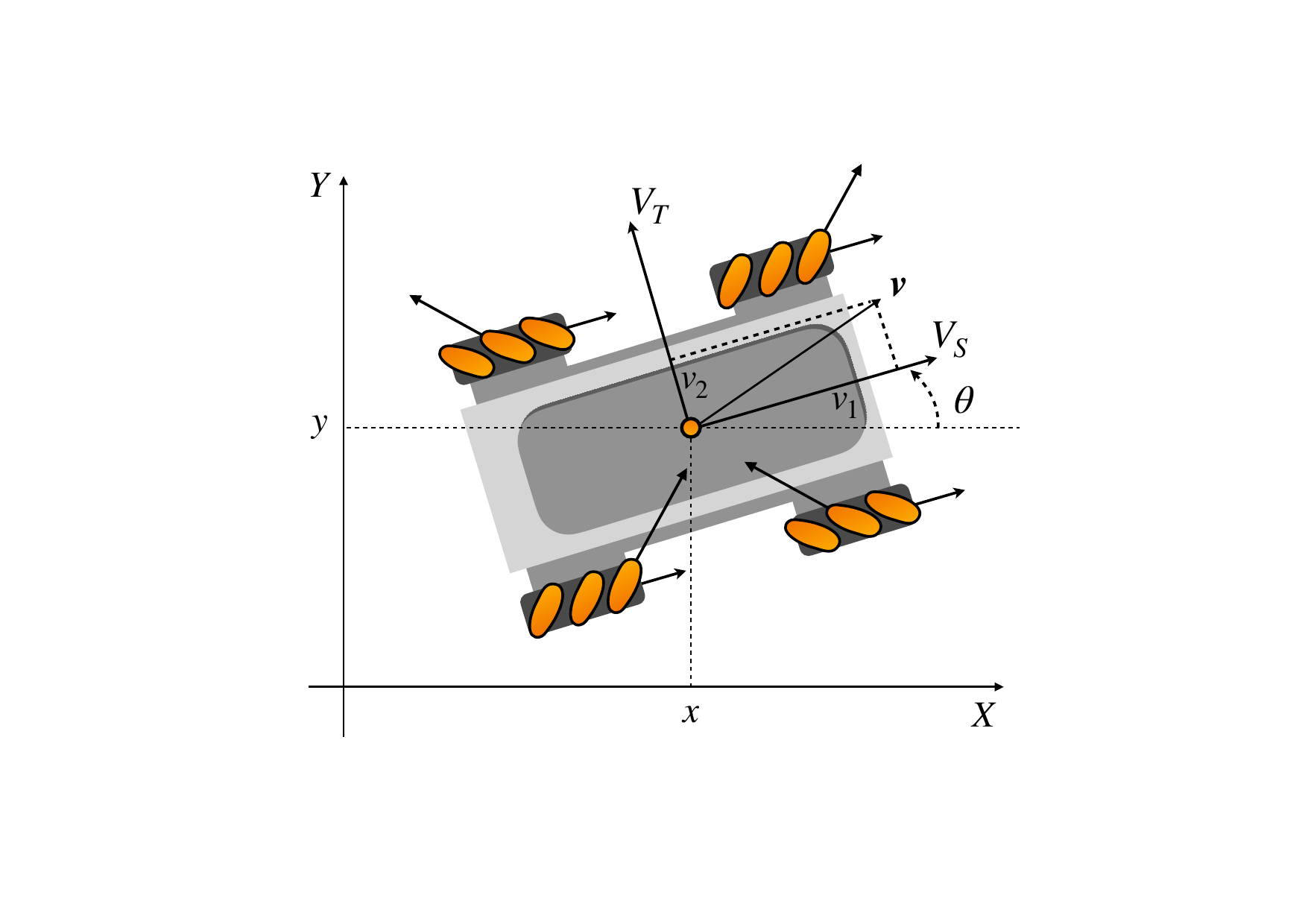}
    \caption{A four Mecanum wheels omnidirectional vehicle. This system is capable of lateral movement (dexterous mode), but this comes at the cost of higher power consumption compared to the more efficient forward-only motion (energy-saving mode). The application of the generic nonlinear controller introduced in this work to this system ensures independent and decoupled exponential stabilization of the position output, along with either the orientation (in dexterous mode) or the lateral speed (in energy-saving mode), depending on the a priori unknown operation mode selected by an external source.}
    \label{fig:model}
\end{figure} 
For instance, Mecanum wheel robots (see Fig.~\ref{fig:model}) are renowned for their omnidirectional mobility, allowing seamless movement in all directions on a flat surface. 
Nevertheless, the sideways movement is so energy-demanding that sometimes it is preferred to avoid or limit it, especially for tasks in which it is not strictly required. 
Another example is flying morphing platforms, such as those described in~\cite{aboudorra2023omnimorph,Ryll2022FASTHex}, which allow additional lateral forces that can be deactivated to conserve energy during point-to-point translation tasks.
This redundancy allows for selective input constraints that can optimize energy consumption by deactivating unnecessary degrees of freedom. 
Such strategies are especially relevant in systems where task requirements are confined to a lower-dimensional subspace, as observed in robotic manipulators and mobile platforms.
The principle of exploiting redundancy for efficiency ~\cite{siciliano_redound,10008952,deluca_tp} is inspired by biological motor control systems, which prioritize tasks-relevant actions while minimizing unnecessary effort.
Methods such as null space projection
~\cite{7097068,zhang2021improved,9326869} have demonstrated the feasibility of reducing actuator usage without compromising task performance.
For example, in \cite{7017672}, auxiliary tasks were achieved by prescribing motion in the null space of the quadrotor-arm Jacobian while preserving the primary task.
Nevertheless, these methods are tailored to the specific structure of the systems and are not general or easy to extend.
Moreover, the auxiliary tasks are not typically perfectly achieved since they are framed as the minimization of the argument of a given cost function, without the guarantee that this minimum is achieved.
Nonlinear Model Predictive Control (nMPC) offers a powerful alternative~\cite{ref_mpc}, enabling the prioritization of specific inputs through a carefully designed energy cost function ~\cite{2020l-BicMazFarCarFra,10591286}. While it has shown strong practical performance, nMPC is inherently susceptible to local minima, which can compromise system robustness.
Additionally, the presence of conflicting objectives may introduce instability, further limiting its reliability ~\cite{He2015OnSO}.

The present work addresses a class of system that are suitable to be controlled in two possible modes: \textit{energy-saving mode} and \textit{dexterous mode}.
These are system which have a primary \textit{main task} that has to be fulfilled at any time, and in parallel to that, an exogenous system (e.g., a human operator) decides at any time if (1) a secondary \textit{auxiliary task} must be also executed (\textit{dexterous mode}) or (2) a minimum number of control inputs should be used (i.e., have to be non-zero) in order to  minimize energy consumption (\textit{energy-saving mode}).

The main contribution is the development of a generic control framework for nonlinear systems with \textit{theoretical guarantees}.
This framework employs a mode-switching signal to transition between an \textit{energy-saving mode}, where unnecessary inputs are disabled to prioritize the \textit{main task}, and a \textit{dexterous mode}, where the remaining inputs are utilized to address the \textit{auxiliary task}. 
Crucially, we demonstrate that the satisfaction of the \textit{main task} is preserved throughout the process—before, during, and after the switching event.

The paper is organized as follows. Section~\ref{sec:MotExmp} introduces a motivating example. 
Section~\ref{subs:preliminaries}  presents the required notions for describing our framework.
In Sections~\ref{subs:sect_4} and~\ref{sect:method}, we formulate the problem and present the proposed solution.
The paper concludes with a continuation of the motivating example and corresponding simulations.


\section{Notation}
We use the shorthand notation \( c_\theta \) for \( \cos\theta \) and \( s_\theta \) for \( \sin\theta \). 
We denote the identity matrix of size \( n \times n \) by \( \mathbf{I}_n \).
Given a vector $\av_m$, we denote its  $i$-th entry by  $a_{m,i}$.

\section{Motivating Example}\label{sec:MotExmp}

As a simple motivating example, consider a four Mecanum wheel omnidirectional vehicle (see Fig.~\ref{fig:model}) whose configuration is given
by the variables $(x,y,\theta)\in \mathbb{R}^2\times \mathcal{S}^1$ representing the
coordinates of the center point and the orientation angle of the vehicle in an inertial
frame, respectively.

The system has three inputs: the sagittal ($v_1$) and transversal ($v_2$) velocities, and the turning rate of the vehicle ($v_3$). 
These inputs are mapped one-to-one to a three dimensional subspace of the four-wheel angular velocities through a simple kinematic relation that is omitted here for simplicity, see~\cite{Muir1990} for the details. 
The dynamical model is given by
\begin{equation}
\Sigma_{\text{4W}}: \left\{ 
\begin{split}
    \dot{x}&=v_1c_\theta -v_2s_\theta,  \\
    \dot{y}&=v_1s_\theta + v_2 c_\theta, \\
     \dot{\theta}&=v_3,
\end{split}
\right.
\label{eq:4w}
\end{equation}
This type of vehicle is more dexterous than non-holonomic systems, like differential drive robots and car-like vehicles, thanks to its ability to also move laterally.
This makes it an ideal solution for precise maneuvering in restricted spaces and motivates its wide use in logistics~\cite{logistics4W,logistics4W_2}.
The downside, as has been shown in~\cite{zhou2016experimental}, is that this vehicle consumes twice the energy when moving in the transversal direction at a given speed as when moving in the sagittal direction at the same speed.

Our goal is obtain a single control scheme that allows the vehicle to seamlessly switch between a \emph{dexterous mode}, where the transversal velocity input $v_2$ is used and each degree of freedom of the configuration $(x,y,\theta)$ is controlled independently, and an \emph{energy-saving mode}, where only the position variables $(x,y)$ are controlled independently (as in a non-holonomic vehicle) and $v_2$ is nullified to minimize the energy consumed. 
We treat the decision about when to switch between the two modes as an a priori unknown exogenous signal $\sigma\in\{0,1\}$, which could be provided by, e.g., a high level operator, depending on the task at hand.
For example, dexterous mode might be requested for accurate maneuvering and manipulation in narrow spaces, and energy saving mode for long distance transportation in large spaces. 

Model identification followed by feedback linearization control schemes are both effective and widely used in practice in industrial robotic systems due to their precise knowledge of system model parameters. 
A naive approach to obtain the sought feedback linearizing controller would be to switch between two different control schemes, one designed  for the dexterous mode and one designed for the energy-saving mode.
In dexterous mode the vector relative degree of the outputs $(x,y,\theta)$ is $\{1,1,1\}$ and a suitable controller is given by
\begin{equation}
\begin{split}
   \begin{pmatrix}
v_1\\
v_2\\
v_3
\end{pmatrix} = \begin{pmatrix}
c_\theta & -s_\theta & 0\\
s_\theta & c_\theta & 0\\
0 & 0 & 1
\end{pmatrix}^{-1}\begin{pmatrix}
    \dot{x}^d + k_{p1}(x^d-x) \\
    \dot{y}^d + k_{p2}(y^d-y)  \\
    \dot{\theta}^d + k_{p3}(\theta^d-\theta)
\end{pmatrix}\ ,
\end{split}
\label{eq:contrUFull}
\end{equation}
where $k_{p1},k_{p2},k_{p3} > 0$ are chosen gains.
In energy-saving mode, when $v_2=0$, one can add a dynamic extension of the input $v_1$ which leads to
a vector relative degree of the outputs $(x,y)$ equal to $\{2,2\}$, and a suitable controller is given by

\begin{equation}
\left\{
     \begin{split}
        v_2 &= 0 , \quad v_1(0)\neq 0,\\
        \begin{pmatrix}
        \dot{v}_1\\
        v_3
        \end{pmatrix} 
        &=
             \begin{pmatrix}
        c_\theta & -v_1 s_\theta  \\
        s_\theta & v_1 c_\theta
        \end{pmatrix}^{-1}
                \begin{pmatrix}
        w_1\\
        w_2
        \end{pmatrix}\\ 
        w_1&=  \ddot{x}^d + k_{p1}(x^d - x) + k_{d1}(\dot{x}^d -\dot{x}),\\
        w_2&=\ddot{y}^d + k_{p2}(y^d - y) + k_{d2}(\dot{y}^d -\dot{y}),
    \end{split}
    \right.
    \label{eq:contrUUnd}
\end{equation}
where $k_{p1},k_{p2},k_{d1},k_{d2} > 0$ are chosen gains~\cite{DELUCA2000687}.

The first issue with simply switching between two unrelated control schemes is in the behavior of the output error dynamics.
In both cases, the position $(x,y)$ is controlled as an output of the system, but the relative degree is changed.
This means that it may not be possible to switch between the controllers without inducing a transient behavior in the error dynamics, and in the worst case, it may not even be possible to guarantee stability or convergence when switching occurs.
Another issue in practice is that when $\sigma$ switches, for example, from $1$ to $0$, the transversal velocity $v_2$ is instantaneously set to zero and therefore it is subject to a discontinuous jump, which is undesirable in practice due to the fact that no real vehicle can change its velocity instantaneously due to its inertia. 
Attempting to abruptly halt the vehicle's lateral motion would likely result in slippage or, worse, damage to components of the actual vehicle.
To address this challenge, our goal is to develop a unified controller which receives as input a desired trajectory of the position, a desired orientation angle trajectory, and an exogenous switching signal $\sigma$ which dictates the mode of our controller. 
We require that:
\begin{enumerate}
    \item The position tracking error is exponentially stable and evolves independently of the value of $\sigma$.
    \item The orientation tracking error is exponentially stable whenever $\sigma = 1$ (dexterous mode).
    \item The transversal velocity is brought exponentially to zero whenever $\sigma = 0$ (energy-saving mode) without discontinuity. 
\end{enumerate}

\section{Preliminaries}\label{subs:preliminaries}

For a comprehensive introduction to feedback linearization, the interested reader is referred to \cite{Isidori1995}.

Consider a multivariable nonlinear system
    \begin{equation}
    \left\{
\begin{split}
     \dot{\xv}&= \fv(\xv)+\Gm(\xv)\uv,\\
    \yv &= \hv(\xv),
    \label{eq:sys}
    \end{split}
    \right.
    \end{equation}
where \mbox{$\xv\in \mathbb{R}^n$} is the state, the input matrix is \mbox{$\Gm(\xv)=\begin{bmatrix}
    \gv_1(\xv)  \;\cdots \; \gv_{p}(\xv)
\end{bmatrix}\in \mathbb{R}^{n \times p}$}, $\fv(\xv)$,  $\gv_1(\xv), \ldots, \gv_{p}(\xv)$ are smooth vector fields, and $\hv(\xv)=\begin{bmatrix} h_1(\xv)  \cdots h_{p}(\xv)\end{bmatrix}^\top$ is a smooth function defined on an open set of $\mathbb{R}^n$.
The  system (\ref{eq:sys}) is said to have \emph{(vector) relative degree} $\rv = \{
    r_1, \ldots, r_{p}
\}$ at a point $\xv^\circ$ w.r.t. the input-output pair  $(\uv,\yv)$ if  
\begin{flalign}
&\text{\textrm{(i)}}    & L_{\gv_j}L^{k}_{\fv} h_i(\xv) &=0,&
\end{flalign}
for all $1\leq j \leq p$, for all $k\leq r_i-1$, for all $1\leq i \leq p$ and for all $\xv$ in a neighborhood of $\xv^\circ$, and\\
\textrm{(ii)}\;\;  the $p\times p$ matrix 
   \begin{align}
    \Am(\xv) &:= 
         \begin{pmatrix}
            L_{\gv_1}L^{r_1-1}_{\fv}h_1(\xv) & \cdots&  L_{\gv_{p}}L^{r_1-1}_{\fv}h_1(\xv) \\ 
            L_{\gv_1}L^{r_2-1}_{\fv}h_2(\xv) & \cdots&  L_{\gv_{p}}L^{r_2-1}_{\fv}h_2(\xv) \\  
            \vdots & & \vdots \\
            L_{\gv_1}L^{r_{p}-1}_{\fv}h_{p}(\xv) & \cdots&  L_{\gv_{p}}L^{r_{p}-1}_{\fv}h_{p}(\xv) 
        \end{pmatrix}
    \label{eq:intbmatrix}
\end{align}  
is nonsingular at $\xv = \xv^\circ$. 
The output array at the $\rv$-th derivative may then be written as an affine system of the form
\begin{equation}
{\yv}^{(\rv)} :=\begin{bmatrix}
    y_1^{(r_1)} \;\cdots\;  y_{p}^{(r_{p})}
\end{bmatrix}^\top =\bv(\xv)+\Am(\xv)\uv,
\label{eq:y_r_now}
\end{equation}
with 
\begin{equation}
 \bv(\xv):=\begin{bmatrix}
L_{\boldsymbol{f}}^{(r_1)}{h_1(\xv)} \; \cdots \; 
L_{\boldsymbol{f}}^{(r_{p})}{h_{p}(\xv)}
\end{bmatrix}^\top.
\label{eq:b}
\end{equation}

Suppose the system \eqref{eq:sys} has some \emph{(vector) relative degree} $\rv:=\{r_1,\ldots,r_p\}$ at $\xv^\circ$ and that the matrix $\Gm(\xv^\circ)$ has rank $p$ in a  neighborhood $\mathcal{U}$ of $\xv^\circ$. Suppose also that  \mbox{$r_1+r_2+\ldots+r_p=n$}, and choose the  control input to be $$\uv = \Am^{-1}(\xv)[-\bv(\xv)+\vv],
$$
where $\vv \in \mathbb{R}^{p}$ can be assigned freely and $\Am(\xv), \bv(\xv)$ are defined as in~(\ref{eq:intbmatrix}) and~(\ref{eq:b}).
Then the output dynamics \eqref{eq:y_r_now} become
$$
\yv^{(\rv)} = \vv.
$$
We refer to \(\yv\) as a \textit{linearizing output array}, which possesses the property that the entire state and input of the system can be expressed in terms of \(\yv\) and its time derivatives.

\section{Problem Formulation}\label{subs:sect_4}

In this section, we generalise our motivating example and state our formal problem definition.

Consider a system $\Sigma$ of the form
\begin{equation}
\Sigma:\left\{
    \begin{split}
\dot{\xv}&=\fv(\xv)+\Gm(\xv)\uv, \\
\yv_1&=\hv_1(\xv_1),\\
\yv_2&=\hv_2(\xv_1),\\
\yv_3&=\hv_3(\xv_2),
\label{eq:Sigma12}
\end{split}
\right.
\end{equation} with $\xv=\begin{bmatrix}
    \xv_1^\top & \xv_2^\top
\end{bmatrix}^\top\in\mathbb{R}^n$, where $\uv\in \mathbb{R}^{p} $ is the control input, and $\yv_i \in \mathbb{R}^{p_i}$ are the output arrays, with $i=1,2,3$, $p_1+p_2=p$, and $p_3=p_2$. We
refer to  $\xv_2$ as the state of the \emph{energy-intense} actuation part and we assume that $\Sigma$ possesses the property that $\xv_2 = \boldsymbol{0}$ whenever $\yv_3 = \boldsymbol{0}$, $\dot{\yv}_3 = \boldsymbol{0}$, and all higher-order derivatives of $\yv_3$ up to a finite order are $\boldsymbol{0}$.

Denote with $\yv = \begin{bmatrix}
    \yv_1^\top & \yv_2^\top
\end{bmatrix}^\top$ and with $\bar{\yv} = \begin{bmatrix}
    \yv_1^\top & \yv_3^\top
\end{bmatrix}^\top$.
Consider a desired output  trajectory \mbox{$\yv^{d}=\begin{bmatrix}
    (\yv^{d}_1)^\top &  (\yv^{d}_2)^\top & \boldsymbol{0}^\top
\end{bmatrix}^\top$} where \mbox{$\yv^{d}_i: [0,\infty) \to \mathbb{R}^{p_i}$}, $i=1,2$,  a signal $\sigma: [0,\infty) \to \{0,1\}$, and define the error $
\ev = \begin{bmatrix}
    \ev_1^\top &
    \ev_2^\top &
    \ev_3^\top
\end{bmatrix} := \begin{bmatrix}
    \yv^d_1 - \yv_1\\
    \yv^d_2 - \yv_2 \\
    -\yv_3
\end{bmatrix}$
with respect to the desired output trajectory.

\begin{problem}\label{prob:main_problem}
Design a smooth controller which, under suitable assumptions, obtains the following output specifications: 
    \begin{enumerate}
\item The dynamics of the error $\ev_1$ are exponentially stable\label{c:1} and independent of the value of $\sigma$.
    \item When $\sigma = 1$, the dynamics of the error 
    $\ev_2$ are exponentially stable\label{c:2}.
    \item When $\sigma = 0$, the dynamics of the error 
    $\ev_3$ are exponentially stable\label{c:3}.
\end{enumerate}
\end{problem}
We make the following assumptions.
\begin{assumpt}\label{asmpt_1}
    The system  \eqref{eq:Sigma12}  has a \emph{(vector)} relative degree $\rv  =\{\rhov_1,\rhov _2\}$ with $\rhov_1=\{r_1,\ldots,r_{p_1}\}, \rhov_2=\{r_{p_1+1},\ldots,r_{p}\}$  w.r.t. the pair ($\uv,\yv)$ and $\bar{\rv}=\{\rhov_1,\rhov_3\}$ with $\rhov_3=\{\bar{r}_{p_1+1},\ldots,\bar{r}_{p}\}$  w.r.t. the pair ($\uv,\bar{\yv}$) at $\xv^\circ$. 
\end{assumpt}
\begin{assumpt}\label{asmpt_2} 
The system \eqref{eq:Sigma12}
  is such that $\sum_{i=1}^pr_i=n$ and $\sum_{i=1}^{p_1}r_i+\sum_{i=p_1+1}^p\bar{r}_i =n$.
\end{assumpt}

\section{Proposed method}\label{sect:method}
\begin{lem}\label{lem:1}
Let $\sigma:[0,\infty)\to \{0,1\}$ be an exogenous switching signal, and let $\vv_1$, $\vv_2$, and $\vv_3$ be arbitrary assignable virtual inputs. 
If Assumptions~\ref{asmpt_1} and~\ref{asmpt_2} hold, then 
there exists a feedback controller $\uv = \kv_{\sigma}(\xv,\vv_1, \vv_2,\vv_3$) such that
\begin{equation}
\left\{
\begin{split}
    \yv_1^{(\rv')}&=\vv_1 \quad \text{at any time},\\
    \yv_2^{(\rv'')}&=\vv_2  \quad \text{when $\sigma=0$},\\
    {\yv}_3^{(\bar{\rv}'')}&=\vv_3 \quad \text{when $\sigma=1$},
    \end{split}
    \right.
\label{eq:output_dyn_tilde}
\end{equation}
and the system has trivial internal dynamics.
\label{lem_1}
\end{lem}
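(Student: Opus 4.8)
The plan is to build $\kv_\sigma$ by gluing together the two static feedback linearizations furnished by Assumptions~\ref{asmpt_1} and~\ref{asmpt_2}: the one associated with the output pair $(\uv,\yv)$ and the one associated with $(\uv,\bar{\yv})$. The key structural remark is that $\yv_1$ is common to both pairs, and the relative-degree indices and Lie-derivative data attached to $\yv_1$ depend only on $h_{1,1},\dots,h_{1,p_1}$, $\fv$ and $\gv_1,\dots,\gv_p$, never on $\yv_2$ or $\yv_3$; Assumption~\ref{asmpt_1} already records this by giving both pairs the same first block $\rhov_1$. Consequently the decoupling matrix $\Am(\xv)$ of $(\uv,\yv)$ and the decoupling matrix $\bar{\Am}(\xv)$ of $(\uv,\bar{\yv})$ share their first $p_1$ rows, say $\Am_1(\xv)\in\mathbb{R}^{p_1\times p}$, and the drift vectors of~\eqref{eq:b} share their first $p_1$ entries, say $\bv_1(\xv)$. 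Writing the remaining blocks as $\Am_2,\bv_2$ (for $\yv_2$) and $\bar{\Am}_3,\bar{\bv}_3$ (for $\yv_3$), equation~\eqref{eq:y_r_now} applied to each output pair yields, on the neighbourhood $\mathcal{U}$ of $\xv^\circ$ where $\Am$ and $\bar{\Am}$ are both invertible, the instantaneous identities $\yv_1^{(\rhov_1)}=\bv_1(\xv)+\Am_1(\xv)\uv$, $\yv_2^{(\rhov_2)}=\bv_2(\xv)+\Am_2(\xv)\uv$, and $\yv_3^{(\rhov_3)}=\bar{\bv}_3(\xv)+\bar{\Am}_3(\xv)\uv$.

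I would then set
\[
\kv_\sigma(\xv,\vv_1,\vv_2,\vv_3):=
\begin{cases}
\Am^{-1}(\xv)\Bigl(-\bv(\xv)+\begin{bmatrix}\vv_1\\ \vv_2\end{bmatrix}\Bigr), & \sigma=0,\\[0.8em]
\bar{\Am}^{-1}(\xv)\Bigl(-\bar{\bv}(\xv)+\begin{bmatrix}\vv_1\\ \vv_3\end{bmatrix}\Bigr), & \sigma=1,
\end{cases}
\]
which is smooth in its continuous arguments for each fixed mode. Substituting $\uv=\kv_\sigma$ into the three identities and reading off the blocks: when $\sigma=0$ one obtains $\yv_1^{(\rhov_1)}=\vv_1$ and $\yv_2^{(\rhov_2)}=\vv_2$; when $\sigma=1$ one obtains $\yv_1^{(\rhov_1)}=\vv_1$ and $\yv_3^{(\rhov_3)}=\vv_3$. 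In particular the $\yv_1$ block equals $\vv_1$ in \emph{both} modes, hence at every instant irrespective of the a priori unknown value and switching times of $\sigma$; this is precisely~\eqref{eq:output_dyn_tilde} with $\rv'=\rhov_1$, $\rv''=\rhov_2$, $\bar{\rv}''=\rhov_3$. Moreover, since $\xv(t)$ stays continuous even when $\uv$ jumps at a switching time, the lower-order derivatives of $\yv_1$ (which are functions of $\xv$ alone) are continuous, so the integrator-chain form of the $\yv_1$-dynamics is genuinely unaffected by the switch.

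For the internal dynamics I would invoke the full-state linearization fact recalled in Section~\ref{subs:preliminaries} (whose rank-$p$ condition on $\Gm$ follows from the nonsingularity of the decoupling matrices). Assumption~\ref{asmpt_2} gives $\sum_{i=1}^{p}r_i=n$, so in mode $\sigma=0$ the map sending $\xv$ to $\yv_1,\dots,\yv_1^{(\rhov_1-1)},\yv_2,\dots,\yv_2^{(\rhov_2-1)}$ is a local diffeomorphism onto $\mathbb{R}^n$ and the linearized closed loop has no zero dynamics; likewise $\sum_{i=1}^{p_1}r_i+\sum_{i=p_1+1}^{p}\bar{r}_i=n$ makes $\bar{\yv}=[\yv_1^\top\ \yv_3^\top]^\top$ a linearizing output array in mode $\sigma=1$. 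In each mode the closed loop is therefore a pure collection of integrator chains with trivial internal dynamics, which concludes the argument. Note that this construction uses only Assumptions~\ref{asmpt_1}--\ref{asmpt_2}; the structural properties $\yv_2=\hv_2(\xv_1)$, $\yv_3=\hv_3(\xv_2)$ and ``$\xv_2=\boldsymbol{0}$ whenever $\yv_3$ and its derivatives vanish'' are not needed here and are what the subsequent result exploits to relate energy-saving mode to $\xv_2\to\boldsymbol{0}$.

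The crux is the structural remark in the first paragraph — that the $\yv_1$-rows of the two decoupling matrices and drift vectors literally coincide — since that is exactly what makes the $\yv_1$-error dynamics invariant under the unknown switching signal; everything afterwards is a direct application of the feedback-linearization machinery recalled in the Preliminaries, the only care points being the local (neighbourhood-of-$\xv^\circ$) nature of the statement and the continuity argument across switching instants.
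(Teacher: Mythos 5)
Your proposal is correct and follows essentially the same route as the paper's proof: both rest on the observation that the $\yv_1$-rows of the two decoupling matrices and drift vectors coincide (so $\yv_1^{(\rhov_1)}=\vv_1$ in either mode), followed by inversion of the mode-appropriate decoupling matrix and the use of Assumption~2 to rule out internal dynamics. The only difference is presentational --- the paper packages the two feedbacks into a single $\sigma$-parametrized matrix $\Am_\sigma$ whose determinant is $\sigma\det\Am+(1-\sigma)\det\bar{\Am}$, whereas you define the controller piecewise, which is identical for $\sigma\in\{0,1\}$ --- and your added remarks on continuity of $\xv$ across switches and on $\operatorname{rank}\Gm=p$ are correct bonuses (note only that the paper's own proof pairs $\sigma=1$ with $\yv_2$, opposite to the labels in the lemma's displayed equations, an internal inconsistency of the paper rather than of your argument).
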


\begin{proof}
We denote with $\Am(\xv)$ the resulting interaction matrix obtained at the $\rv$-th differentiation of the output vector $\yv$ and with  $\bar{\Am}(\xv)$ the resulting interaction matrix obtained at the $\bar{\rv}$-th differentiation of the output vector $\bar{\yv}$. For clarity, we partition the two matrices into submatrices i.e.
\begin{align*}
 \Am(\xv)=\left[\begin{smallmatrix}
 \Am_{11}(\xv) & \Am_{12}(\xv)\\
   \Am_{21}(\xv) & \Am_{22}(\xv)\\
\end{smallmatrix}\right], 
&&
\bar{\Am}(\xv)=\left[\begin{smallmatrix}
 \bar{\Am}_{11}(\xv) & \bar{\Am}_{12}(\xv)\\
   \bar{\Am}_{21}(\xv) & \bar{\Am}_{22}(\xv)\\
\end{smallmatrix}\right], 
\end{align*}
in which $\bar{\Am}_{11}(\xv)=\Am_{11}(\xv)$ and   $\bar{\Am}_{12}(\xv)=\Am_{12}(\xv)$ since the two vectors $\yv$ and $\bar{\yv}$ share the same first entries $\yv_1$.
We know by assumption that both $\Am(\xv)$ and $\bar{\Am}(\xv)$ are nonsingular matrices.
Consider now the output \begin{equation}
{\yv}_{\sigma}= \begin{bmatrix}
 \yv_1 ^\top &
 \sigma \yv_2^\top + (1-\sigma)\yv_3^\top
\end{bmatrix}^\top.
\label{y_tilde}
\end{equation}
The corresponding interaction matrix obtained when considering the pair ($\uv,{\yv}_{\sigma}$) has the structure
\begin{equation}
{\Am}_{\sigma}(\xv) = \left[\begin{smallmatrix}
 \Am_{11}(\xv) & \Am_{12}(\xv) \\
 \sigma \Am_{21}(\xv)+(1-\sigma) \bar{\Am}_{21}(\xv) &   \sigma \Am_{22}(\xv)+(1-\sigma) \bar{\Am}_{22}(\xv)
\end{smallmatrix}\right].
\label{eq:Atilde}
\end{equation}
The determinant of ${\Am}_{\sigma}(\xv)$ is 
\begin{equation}
    \sigma \operatorname{det}\Am(\xv) + (1-\sigma) \operatorname{det}\bar{\Am}(\xv).
\end{equation}
Hence, independently of the value of $\sigma\in \{0,1\}$, the matrix ${\Am}_{\sigma}(\xv)$ is invertible in a neighborhood of $\xv^\circ$. 
Moreover, from Assumption \ref{asmpt_2}, the sum of the relative degree is always equal to the dimension of the state space $n$. 
It follows that, at the ${\rv}_{\sigma}$-th differentiation of the output ${\yv}_{\sigma}$, (where ${\rv}_{\sigma}=\rv $ when $\sigma=1$ and ${\rv}_{\sigma}=\bar{\rv}$ when $\sigma=0$) we get
\begin{equation}
{\yv}^{({\rv}_{\sigma})}_{\sigma} = \bv(\xv) + {\Am}_{\sigma}(\xv)\uv.
\end{equation}
By choosing the feedback linearizing controller
\begin{equation}
    \uv = {\Am}_{\sigma}(\xv)^{-1}[-\bv(\xv)+{\vv}_{\sigma}],
    \label{eq:control}
\end{equation}
with \begin{equation}
    {\vv}_{\sigma}=\begin{bmatrix}
    \vv_1^\top & \sigma \vv_2^\top+(1-\sigma ){\vv}_3^\top
\end{bmatrix}^\top,
\label{eq:v}
\end{equation}
the dynamics of the switching output ${\yv}_{\sigma}$ is given by~\eqref{eq:output_dyn_tilde}.
The internal dynamics are trivial since $~{\sum_{i}^n{r}_{\sigma,i}=n}$.
\end{proof}

The overall control architecture is shown in Figure~\ref{fig:lemma1}.
\begin{figure}[t]
    \centering
   \includegraphics[trim={3.5cm 2cm 1.3cm 2cm},clip,scale=0.3]{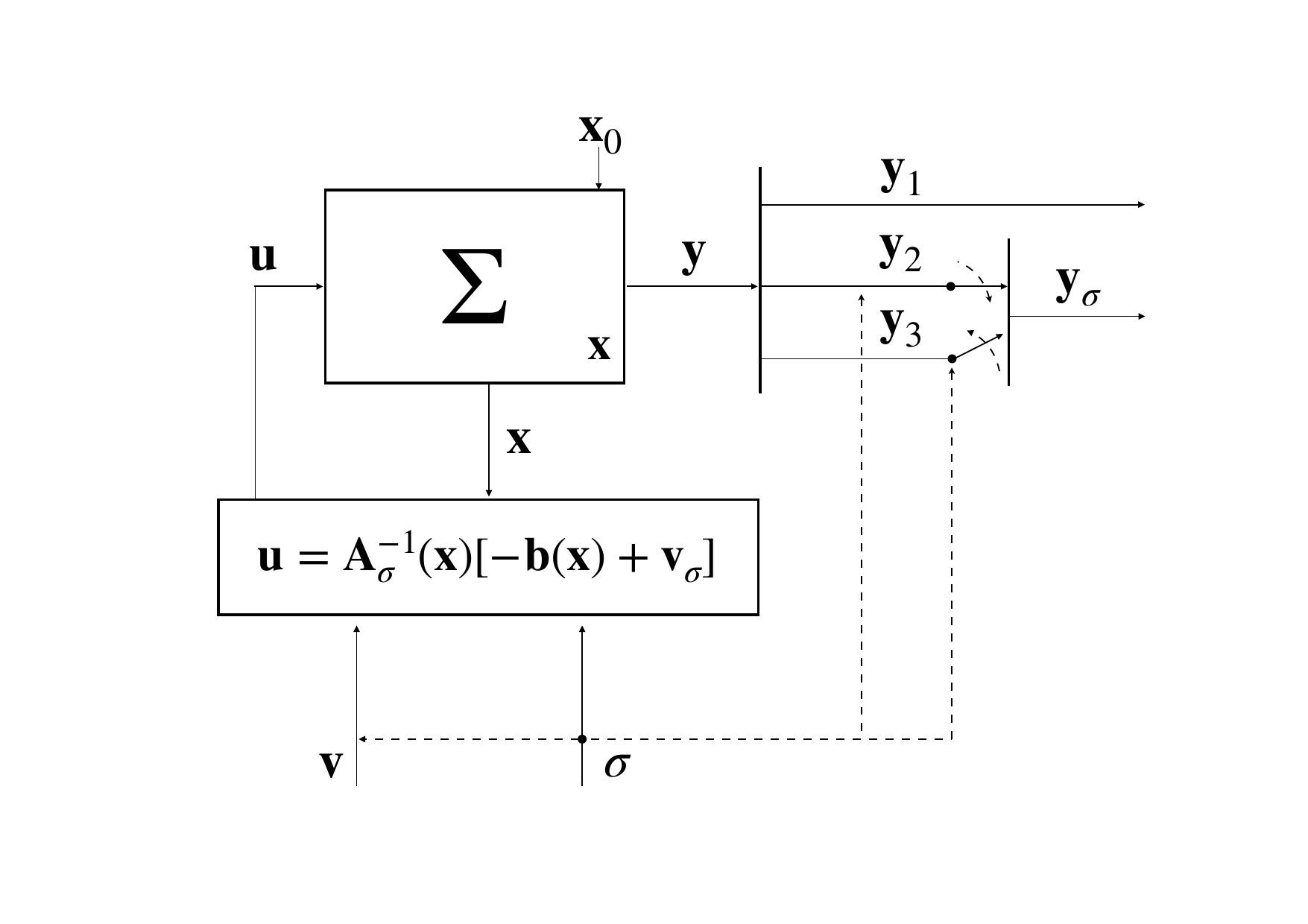}
    \caption{The Proposed Control Architecture.} 
    \label{fig:lemma1}
\end{figure}

\begin{thm}\label{thm:1}

    If  Assumptions~\ref{asmpt_1} and \ref{asmpt_2},  hold, then  a  solution of Problem~\ref{prob:main_problem}
is given by  the controller~\eqref{eq:control} with $\vv_1,\vv_2,\vv_3$ in~\eqref{eq:v} given by 
\begin{equation}
    \vv_j = \yv_j^{d,(\rhov_j)}+ \sum^{\rhov_{j}-1}_{i=1}\Lm^i_{j} ({\yv}^{d,(i)}_j-{\yv}^{(i)}_j), \quad j=1,2,3
    \label{eq:control2}
\end{equation} where  the $\Lm^i_j$ matrices are chosen arbitrarily, subject only to the constraint that substituting~\eqref{eq:control2} into~\eqref{eq:output_dyn_tilde} results in a linear output error dynamics that is exponentially stable.
\end{thm}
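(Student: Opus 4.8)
The plan is to close the loop on top of Lemma~\ref{lem:1} and then verify the three specifications of Problem~\ref{prob:main_problem} one at a time. First I would invoke Lemma~\ref{lem:1}: under Assumptions~\ref{asmpt_1}--\ref{asmpt_2} the controller~\eqref{eq:control} with the structured virtual input~\eqref{eq:v} yields the closed-loop relations~\eqref{eq:output_dyn_tilde} with trivial internal dynamics, so that $\yv_1$ obeys the integrator chain $\yv_1^{(\rhov_1)}=\vv_1$ for every $t$ irrespective of $\sigma$, while the auxiliary output active in the current mode obeys its own decoupled chain driven by $\vv_2$, respectively $\vv_3$. Since each $\vv_j$ in~\eqref{eq:control2} is a smooth function of the reference and of the (input-free) output derivatives, the resulting feedback is smooth in the state within each mode, and substituting~\eqref{eq:control2} turns every chain into a linear error ODE whose stability I then read off.

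For specification~1, I substitute~\eqref{eq:control2} with $j=1$ into $\yv_1^{(\rhov_1)}=\vv_1$ and subtract $\yv_1^{d,(\rhov_1)}$. Because $\vv_1$ is assembled solely from $\yv_1^{d}$ and $\yv_1$ and their derivatives --- and contains no $\sigma$ --- this produces a constant-coefficient linear homogeneous ODE in $\ev_1$, of the schematic form $\ev_1^{(\rhov_1)}=-\sum_i \Lm_1^i\,\ev_1^{(i)}$, which is the same on every time interval regardless of the switching history. Choosing the $\Lm_1^i$ so that the associated characteristic polynomials are Hurwitz --- possible component by component, each entry of $\ev_1$ being an independent chain --- makes it exponentially stable, yielding both claims of item~1.

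For specifications~2 and~3, I restrict to a maximal interval on which $\sigma$ is constant. If $\sigma\equiv 1$, Lemma~\ref{lem:1} gives $\yv_2^{(\rhov_2)}=\vv_2$, and substituting~\eqref{eq:control2} with $j=2$ produces, exactly as before, a constant-coefficient linear homogeneous exponentially-stable ODE in $\ev_2$; symmetrically for $\ev_3$ when $\sigma\equiv 0$. The only extra care concerns switching instants: the state $\xv(\cdot)$ evolves continuously in time, hence $\hv_2(\xv_1)$, $\hv_3(\xv_2)$ and their derivatives up to one order below the relevant relative degree are continuous, so when $\sigma$ toggles the newly active error ODE restarts from finite, well-defined initial conditions and the exponential decay resumes on the next interval. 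In particular $\xv_2$, hence $\yv_3$, does not jump at a switch, which resolves the discontinuity concern raised in the motivating example.

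The substantive step has in fact already been discharged inside Lemma~\ref{lem:1}: the determinant identity for $\Am_\sigma(\xv)$ following~\eqref{eq:Atilde}, which keeps it invertible for both values of $\sigma$, and --- crucially --- the fact that the $\yv_1$-block of the linearised dynamics is the \emph{same} map $\vv_1$ in either mode. Granting that, Theorem~\ref{thm:1} is essentially a verification, and the two points I would spell out with care are (i) that admissible gains $\Lm_j^i$ exist at all --- the elementary fact that the low-order coefficients of a monic polynomial can be assigned to place every root in the open left half-plane, so a whole family of valid choices is available --- and (ii) that the switching transients in the momentarily inactive error are harmless, since a Hurwitz linear system converges from any initial state while the relevant error components stay continuous across switches; no uniform bound on the inactive error is claimed, and none is needed for the three specifications as stated.
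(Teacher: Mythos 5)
Your proof is correct and follows essentially the same route as the paper: invoke Lemma~\ref{lem:1}, substitute~\eqref{eq:control2} into the linearized chains~\eqref{eq:output_dyn_tilde}, and read off the exponentially stable linear error dynamics. The additional points you spell out --- continuity of the inactive error across switching instants and the existence of Hurwitz gain choices --- are left implicit in the paper but are consistent with (and slightly strengthen) its argument.
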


\begin{proof}
    If Assumptions~\ref{asmpt_1} and~\ref{asmpt_2} hold, then Lemma \ref{lem:1} applies. Hence, there exists a feedback linearizing controller \eqref{eq:control}  such that~\eqref{eq:output_dyn_tilde} holds. At this point, if we replace $\vv$ with \eqref{eq:v} then the error dynamics are given by 
    \begin{equation}
\left\{
\begin{split}
\ev_1^{(\rhov_1)} + \sum^{\rhov_{1}-1}_{i=1} \Lm^i_{1} \ev^{(i)}_1 &= \boldsymbol{0}, \quad \text{at any time},\\
\ev_2^{(\rhov_2)} + \sum^{\rhov_{2}-1}_{i=1} \Lm^i_{2} \ev^{(i)}_2 &= \boldsymbol{0}, \quad \text{when $\sigma=0$},\\
\ev_3^{(\rhov_3)} + \sum^{\rhov_{3}-1}_{i=1} \Lm^i_{3} \ev^{(i)}_3 &= \boldsymbol{0}, \quad \text{when $\sigma=1$}.
\end{split}
\right.
\end{equation}
    which solves the Problem \ref{prob:main_problem}.
\end{proof}
One possible choice of $\Lm^i_j$ is given by diagonal positive definite matrices.

\begin{rem}
    The closed loop system can be seen as a 
     switched system with externally forced switching \cite{Liberzon2003,survey_hybrid}
   of the form $ \dot{\xv}=\fv_{\sigma}(\xv,t)
    $ with output array ${\yv}_{\sigma}$ given in \eqref{y_tilde} and with switching signal $\sigma$.
\end{rem}

\section{Application to the Motivating Example}

The Mecanum wheel robot system \eqref{eq:4w} presented in Section \ref{sec:MotExmp} can be cast into the required form \eqref{eq:sys} by using a dynamic extension of the linear velocities $v_1, v_2$.
We thus define the state variables as $\xv_1 = \begin{bmatrix} x & y & \theta & v_1 \end{bmatrix}^\top$ and $\xv_2 = v_2$.
This extension incorporates the energy-intense input as part of the state, and is necessary so that the resulting system satisfies the assumptions \ref{asmpt_1} and \ref{asmpt_2}.

\subsection{Derivation of the controller equations}

When operating the platform in \textit{energy-saving mode}, the objective is to reduce the transversal velocity $v_2$ to zero, thereby minimizing energy consumption. 
Conversely, in \textit{dexterous mode}, the goal is to achieve trajectory tracking of the full configuration of the vehicle  (position and orientation).
To address these objectives, we define the output vector as $\yv = \begin{bmatrix} \yv_1^\top & y_2 & y_3 \end{bmatrix}^\top$, where $\yv_1 = \begin{bmatrix} x & y \end{bmatrix}^\top$, $y_2 = \theta$, and $y_3 = v_2$. The input vector is given by $\uv = \begin{bmatrix} u_1 & u_2 & u_3 \end{bmatrix}^\top$, with
    $\dot{v}_1=u_1$, 
    $\dot{v}_2=u_2$ 
 and $u_3 := v_3$.
The system exhibits a \emph{(vector) relative degree} $\rv = \{\rhov_1, \rhov_2\}$, where $\rhov_1 = \{2, 2\}$ and $\rhov_2 = 1$, relative to the pair $(\uv, \yv)$. Additionally, it exhibits a  \emph{(vector) relative degree}  $\bar{\rv} = \{\rhov_1, \rhov_3\}$, with $\rhov_3 = 1$, relative to the pair  $(\uv, \bar{\yv})$. Consequently, Assumptions~\ref{asmpt_1} and~\ref{asmpt_2} hold, allowing the application of Theorem~\ref{thm:1}.
Consider now the output ${\yv}_{\sigma}$ defined  in \eqref{y_tilde}. At the $\tilde{\rv}$-th differentation, the dynamics are given by $$
{\yv}^{({\rv}_{\sigma})}_{\sigma}={\Am}_{\sigma}(\xv)\uv 
$$ where the interaction matrix defined in~\eqref{eq:Atilde} is 
$${\Am}_{\sigma}(\xv) = \begin{pmatrix}
    c_\theta  & -s_\theta &-v_1s_\theta-v_2c_\theta   \\ s_\theta& c_\theta& v_1c_\theta-v_2s_\theta  \\
    0 & 1-\sigma & \sigma
\end{pmatrix}$$%
and $\bv(\xv)=\boldsymbol{0}$. Then, using \eqref{eq:control}, the control law  becomes 
\begin{equation}
    \uv = {\Am}_{\sigma}(\xv)^{-1}\vv_{\sigma}
\label{eq:c_un}
\end{equation}
 with $\vv_{\sigma}$ as in \eqref{eq:v} and the $\vv_j$ as in \eqref{eq:control2} solves the Problem~\ref{prob:main_problem} for the four mecanum wheel vehicle.

\begin{figure}[t]
\centering
\includegraphics[width=0.54\columnwidth]{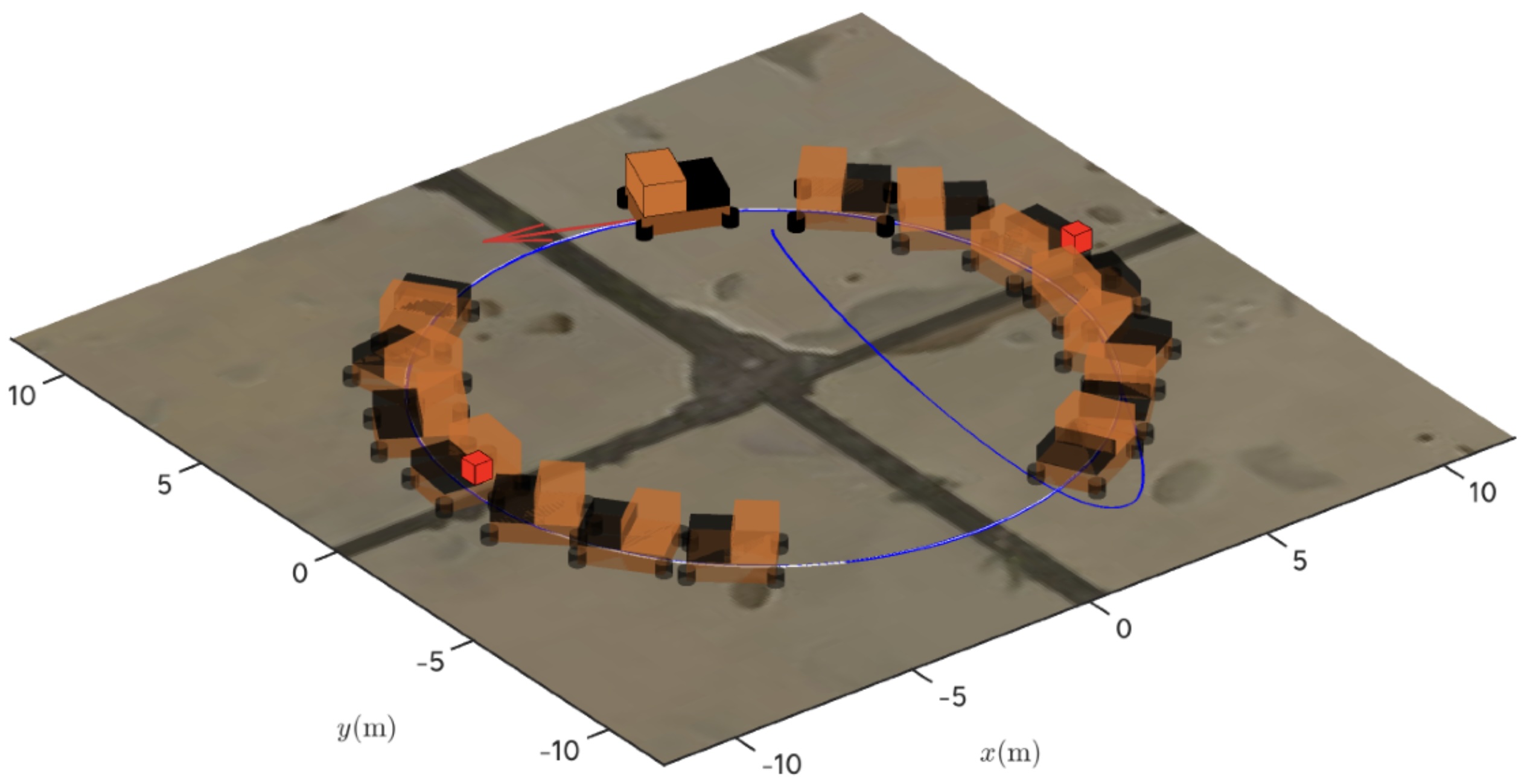}
\hfill
\includegraphics[width=0.44\columnwidth]{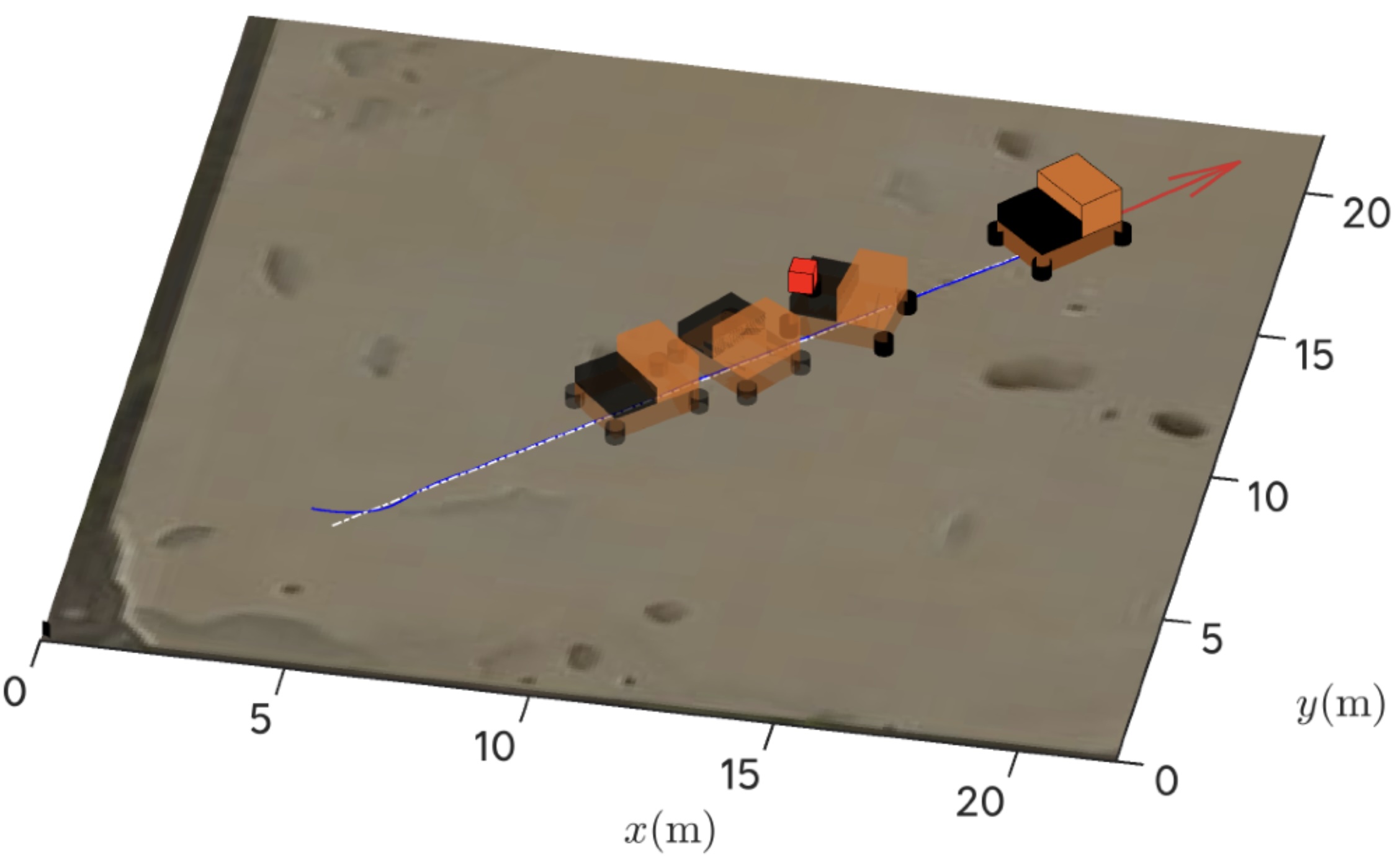}
      \caption{Stroboscopic highlights of two simulations.
      In \textbf{Simulation~1}  (left), the robot converges to and follows a circular trajectory.
      In \textbf{Simulation~2} (right), the robot converges to a straight-line trajectory  while carrying a load (depicted in orange) and avoiding hanging obstacles (shown in red).
      The robot operates in dexterity mode only when necessary (as determined by the switching signal $\sigma$), prioritizing energy-saving mode when far from obstacles.}
\label{fig:animation}
\end{figure}

\begin{rem}
The dynamic extension of the sagittal velocity leads to a singularity in the resulting controller.
Specifically, the decoupling matrix ${\Am}_{\sigma}(\xv)$ becomes singular when $\sigma = 0$ and $v_1 = 0$.
Such a singularity must be carefully addressed and avoided during trajectory planning, particularly when employing interpolation techniques. This can typically be achieved by appropriately selecting the initialization of the state $v_1$—an additional degree of freedom available in the design.
   
\end{rem}

\subsection{Numerical Simulations}
To demonstrate the capabilities of the proposed controller in a realistic scenario, we present two simulations where a four-Mecanum-wheel omnidirectional robot is tasked with transporting a load while following two different trajectories in the presence of obstacles (see Fig.~\ref{fig:animation}).

The first trajectory is a circular path defined by:
\[
\yv^d_1(t) = \begin{bmatrix}
r\sin(\omega t) &
-r\cos(\omega t)
\end{bmatrix}^\top, 
\]
where $ r = 8 \, \text{m} $ and $ \omega = 0.15 \, \text{rad/s} $. 

Two hanging obstacles are positioned along the trajectory. When the robot encounters these obstacles, a reorientation of $\pi/2 \, \text{rad}$ is required to avoid collisions between the transported load and the obstacles. The desired orientation for obstacle avoidance is defined as:
\[
y_2^d(t) = \omega t + \frac{\pi}{2}.
\]

For the majority of the trajectory, the switching signal $\sigma$ is set to zero, meaning that the robot operates in energy-saving mode. However, when passing beneath the obstacles, $\sigma$ switches to one, activating the dexterous mode. In this mode, the robot adjusts its orientation to avoid collisions while continuing to follow the position trajectory. 

\begin{figure}[t]
    \centering
    \includegraphics[trim={0.5cm 0cm 0cm 0cm},clip,scale=0.174]{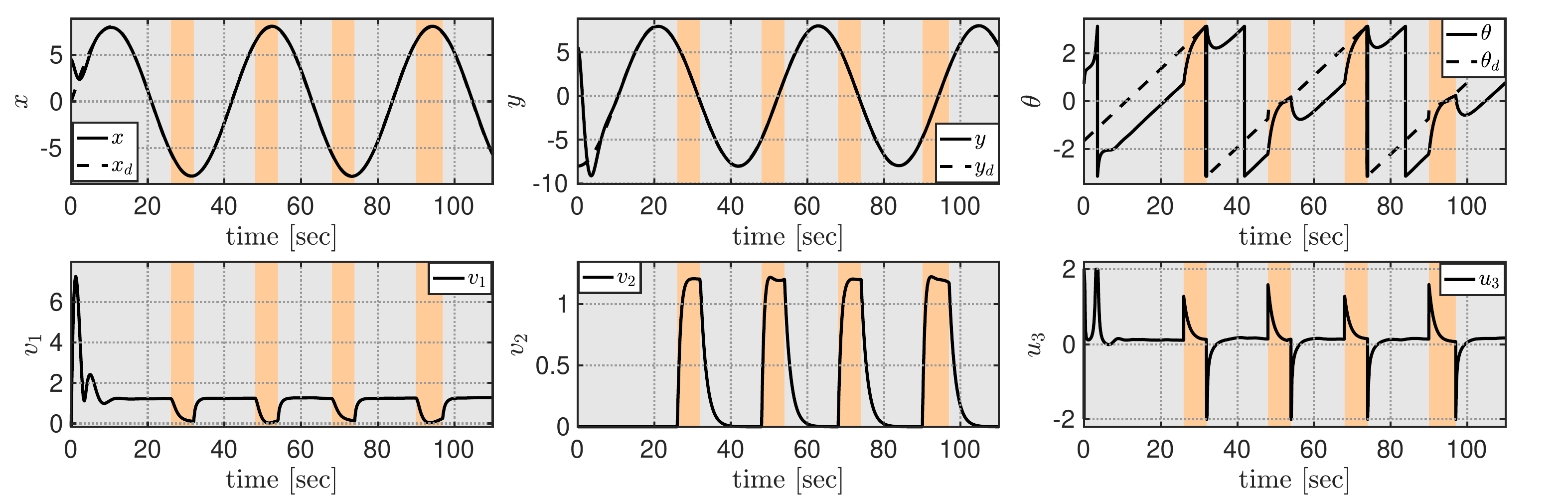}
    \caption{\textbf{Simulation~1.}  A circular input reference trajectory for the position of the CoM and a square form switching signal ${\sigma}$ are given to the control system. The gray areas correspond to $\sigma=0$ whereas the orange areas correspond to $\sigma=1$. The top row shows the output variables $\yv_1,\yv_2$, and the bottom row shows the sagittal velocity $v_1$, the third output $v_2$ and the control input $u_3$.  }
    \label{fig:circle}
\end{figure}

\begin{figure}[t]
    \centering
    \includegraphics[trim={0.6cm 0cm 0cm 0cm},clip,scale=0.174]{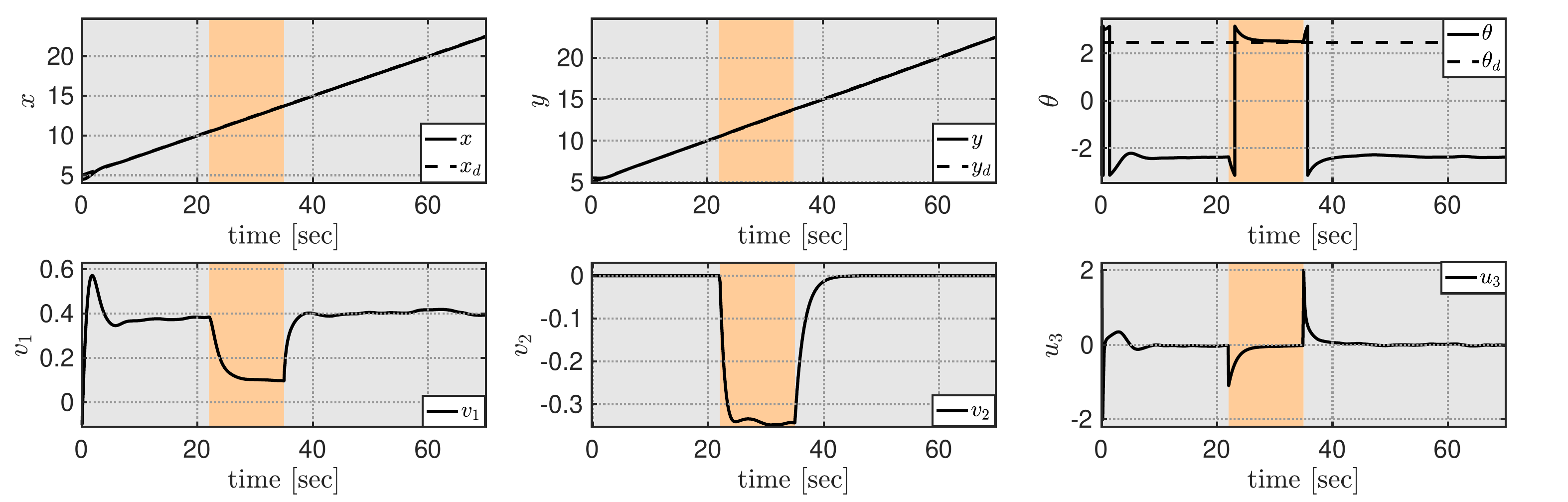}
\caption{\textbf{Simulation~2.} A ramp input reference trajectory with a square form switching signal $\sigma$ are given to the control system.
The gray areas correspond to $\sigma=0$ whereas the orange areas correspond to $\sigma=1$. The top row shows the output variables $\yv_1,\yv_2$, and the bottom row shows the sagittal velocity $v_1$, the third output $v_2$ and the control input $u_3$. }
    \label{fig:ramp} 
\end{figure}

Fig.~\ref{fig:circle} illustrates the simulation results. The gray areas correspond to $\sigma=0$ (energy-saving mode), while the orange areas represent $\sigma=1$ (dexterous mode).
The plots show the output variables $\yv_1$ and $\yv_2$ at the top, and the sagittal velocity $v_1$, the third output $v_2$, and the control input $u_3$ at the bottom.

The second trajectory is a straight line defined by:
\[
\yv^d_1(t) = \begin{bmatrix}
5 + \frac{t}{4} &
5 + \frac{t}{4}
\end{bmatrix}^\top.
\]

Similarly, a hanging obstacle is placed along this trajectory. To avoid a collision, the desired orientation is given by:
\[
y_2^d(t) = \frac{3\pi}{4}.
\]

The initial conditions for both simulations were set far from the trajectory to showcase transient behaviors. The gain matrices used were:
\[
\Lm^1_{1} = \mathbf{I}_2, \quad
\Lm^2_{1} = \mathbf{I}_2, \quad
L^1_2 = 0.75, \quad
L^1_3 = 0.65.
\]
This simulation also incorporated a low-pass filtered Gaussian noise $\nv \in \mathbb{R}^3$ in the actuation inputs to enhance realism. 
Precisely, to the control input \eqref{eq:c_un} we add the noise resulting from the solution of 
\begin{align}\label{eq:lowpass-noise}
    \dot{\nv}=-k\nv+\boldsymbol{\mu}, && \boldsymbol{\mu} \in \mathcal{N}(\boldsymbol{0}, q^2 \mathbf{I}_3),
\end{align}
where $q = 0.4$ and, $k=0.1$. 
Figure \ref{fig:noise} shows the components of the noise $\nv$ added to the actuators during Simulation~1.

\begin{figure}[t]
    \centering
    \includegraphics[trim={0.04cm 0cm 0cm 0cm},clip,scale=0.171]{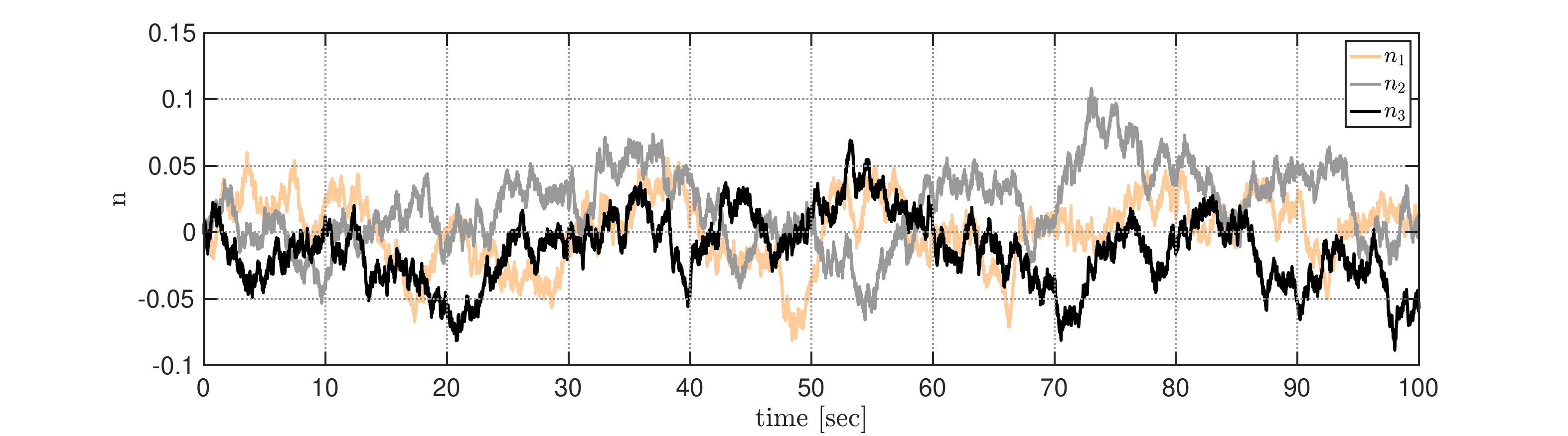}
    \caption{A realization of the Gaussian noise resulting from \eqref{eq:lowpass-noise}.}
    \label{fig:noise}
\end{figure}

The plots for both simulations (Figs.~\ref{fig:circle} and~\ref{fig:ramp}) demonstrate the controller's ability to solve the trajectory tracking problem effectively. When the switching signal $\sigma=1$, the system transitions into dexterous mode, where the smooth tracking of both $\yv_1$ and $\yv_2$ is ensured.
When the switching signal $\sigma=0$, the system transitions into energy saving mode, where
the velocity $v_2$ is brought exponentially to zero while ensuring smooth tracking of $\yv_1$.
It is noteworthy that while the trajectory $y_2^d(t)$ is defined at all times, it is only enforced when $\sigma=1$. When $\sigma=0$, the angle variable $\theta$ ceases to follow the prescribed trajectory. Instead, the robot moves in a unicycle-like fashion, with $\theta(t)$ evolving according to the platform’s flat outputs $x$ and $y$, i.e., $\theta(t) = \mathrm{atan2}\{\dot{y}, \dot{x}\}$. Despite this deviation in $\theta$, the position coordinates $x$ and $y$, and hence $\yv_1$, continue to perfectly track the desired trajectory.

The videos corresponding to the simulations of Figs.~\ref{fig:circle} and~\ref{fig:ramp} are available at \mbox{{\small\url{https://youtu.be/Wn4hVNXEjmc}}}.

\section{Conclusion}

This letter presents a novel control approach that selectively utilizes system inputs,  reducing energy consumption by eliminating unnecessary inputs for the primary task.
Beyond the primary control objective, a secondary task is dynamically assigned based on an input-switching signal.
This signal alternates between tracking additional system variables and driving energy-intense variables to zero. 
A key contribution of this work is demonstrating that the primary task is always fulfilled, irrespective of the switching signal's state. 

The proposed method provides the following advantages compared with state of the art alternatives:
 \begin{enumerate}
     \item Formal guarantees of \textit{exponential} stability of the primary objective with dynamics that are decoupled from the switching signal.
     \item Formal guarantees of \textit{exponential} stability of the secondary task while the switching signal is constant.
     \item A smooth control law with a closed-form expression.
 \end{enumerate}
Since the method is based on feedback linearization, a potential disadvantage is its lack of robustness to uncertainties in system model parameters. 
Additionally, the method requires a specific form of the system in order to be applied, which limits its generality.
Nevertheless, the domain of application of the method is  large and includes several practical systems, such as the Mecanum wheel robot presented in the example.

Future research will focus on robustifying and experimentally validating these theoretical findings,  particular to the Mecanum Wheel.

\balance
\bibliographystyle{IEEEtran}
\bibliography{bibAlias,bibAF,bibCustom,ref2}

\begin{thebibliography}{10}
\providecommand{\url}[1]{#1}
\csname url@samestyle\endcsname
\providecommand{\newblock}{\relax}
\providecommand{\bibinfo}[2]{#2}
\providecommand{\BIBentrySTDinterwordspacing}{\spaceskip=0pt\relax}
\providecommand{\BIBentryALTinterwordstretchfactor}{4}
\providecommand{\BIBentryALTinterwordspacing}{\spaceskip=\fontdimen2\font plus
\BIBentryALTinterwordstretchfactor\fontdimen3\font minus \fontdimen4\font\relax}
\providecommand{\BIBforeignlanguage}[2]{{%
\expandafter\ifx\csname l@#1\endcsname\relax
\typeout{** WARNING: IEEEtran.bst: No hyphenation pattern has been}%
\typeout{** loaded for the language `#1'. Using the pattern for}%
\typeout{** the default language instead.}%
\else
\language=\csname l@#1\endcsname
\fi
#2}}
\providecommand{\BIBdecl}{\relax}
\BIBdecl

\bibitem{TAHERI2020103958}
H.~Taheri and C.~X. Zhao, ``Omnidirectional mobile robots, mechanisms and navigation approaches,'' \emph{Mechanism and Machine Theory}, vol. 153, no.~11, p. 103958, 2020.

\bibitem{aboudorra2023omnimorph}
Y.~Aboudorra, C.~Gabellieri, R.~Brantjes, Q.~Sabl\'e, and A.~Franchi, ``Modelling, analysis, and control of {OmniMorph}: an omnidirectional morphing multi-rotor {UAV},'' \emph{Journal of Intelligent \& Robotics Systems}, vol. 110, no.~21, 2024.

\bibitem{Ryll2022FASTHex}
M.~Ryll, D.~Bicego, M.~Giurato, M.~Lovera, and A.~Franchi, ``{FAST-Hex} - a morphing hexarotor: Design, mechanical implementation, control and experimental validation,'' \emph{IEEE/ASME Trans. on Mechatronics}, vol.~27, no.~3, pp. 1244--1255, 2021.

\bibitem{siciliano_redound}
B.~Siciliano, ``Kinematic control of redundant robot manipulators: a tutorial,'' \emph{Journal of Intelligent \& Robotics Systems}, vol.~3, no.~3, pp. 201--212, 1990.

\bibitem{10008952}
M.~D. Fiore, G.~Meli, A.~Ziese, B.~Siciliano, and C.~Natale, ``A general framework for hierarchical redundancy resolution under arbitrary constraints,'' \emph{IEEE Trans. on Robotics}, vol.~39, no.~3, pp. 2468--2487, 2023.

\bibitem{deluca_tp}
M.~Khatib, K.~{Al Khudir}, and A.~{De Luca}, ``Task priority matrix under hard joint constraints,'' in \emph{2020 Italian Conference on Robotics and Intelligent Machines (I-RIM)}, Rome, Italy, Dec. 2020.

\bibitem{7097068}
F.~Flacco, A.~D. Luca, and O.~Khatib, ``Control of redundant robots under hard joint constraints: saturation in the null space,'' \emph{IEEE Trans. on Robotics and Automation}, vol.~31, no.~3, pp. 637--654, 2015.

\bibitem{zhang2021improved}
X.~Zhang, B.~Fan, C.~Wang, and X.~Cheng, ``An improved weighted gradient projection method for inverse kinematics of redundant surgical manipulators,'' \emph{Sensors}, vol.~21, no.~21, p. 7362, 2021.

\bibitem{9326869}
Y.~Yan, H.~Cui, B.~Lin, and P.~Han, ``A task-priority-transition-based obstacle avoidance strategy for space manipulator,'' in \emph{2020 Chinese Automation Congress (CAC)}, Shanghai, China, Nov. 2020, pp. 3857--3862.

\bibitem{7017672}
A.~Santamaria-Navarro, V.~Lippiello, and J.~Andrade-Cetto, ``Task priority control for aerial manipulation,'' in \emph{2014 IEEE International Symposium on Safety, Security, and Rescue Robotics}, Toyako, Hokkaido, Japan, Oct. 2014, pp. 1--6.

\bibitem{ref_mpc}
R.~Findeisen and F.~Allgöwer, ``An introduction to nonlinear model predictive control,'' in \emph{21st Benelux Meeting on Systems and Control}, Veldhoven, Netherlands, Mar. 2002.

\bibitem{2020l-BicMazFarCarFra}
D.~Bicego, J.~Mazzetto, M.~Farina, R.~Carli, and A.~Franchi, ``Nonlinear model predictive control with enhanced actuator model for multi-rotor aerial vehicles with generic designs,'' \emph{Journal of Intelligent \& Robotics Systems}, vol. 100, pp. 1213--1247, 2020.

\bibitem{10591286}
E.~L. Foseid, E.~A. Basso, H.~M. Schmidt-Didlaukies, K.~Y. Pettersen, and J.~T. Gravdahl, ``Model predictive task-priority control using control lyapunov functions,'' in \emph{2024 European Control Conference (ECC)}, Stockholm, Sweden, Jun. 2024, pp. 3185--3192.

\bibitem{He2015OnSO}
D.~He, L.~Wang, and J.~Sun, ``On stability of multiobjective {NMPC} with objective prioritization,'' \emph{Automatica}, vol.~57, no.~8, pp. 189--198, 2015.

\bibitem{Muir1990}
P.~F. Muir and C.~P. Neuman, ``Kinematic modeling for feedback control of an omnidirectional wheeled mobile robot,'' in \emph{1987 {IEEE} Int. Conf. on Robotics and Automation}, vol.~4, Raleigh, North Carolina, USA, 1987, pp. 1772--1778.

\bibitem{logistics4W}
J.~Zhang, Y.~Dong, W.~Huang, and D.~Chen, ``Research on the development of china’s logistics,'' \emph{Open Journal of Business and Management}, vol.~07, pp. 400--426, 2019.

\bibitem{logistics4W_2}
Y.~Chen, J.~Wu, C.~He, and S.~Zhang, ``Intelligent warehouse robot path planning based on improved ant colony algorithm,'' \emph{IEEE Access}, vol.~11, no.~2, pp. 12\,360--12\,367, 2023.

\bibitem{zhou2016experimental}
Y.~Zhou, M.~Tomizuka, and T.~Zong, ``Experimental validation of energy consumption model for the four-wheeled omnidirectional mecanum robots for energy-optimal motion control,'' in \emph{2016 {IEEE/SICE} International Symposium on System Integration (SII)}, Auckland, New Zealand, Apr. 2016, pp. 768--773.

\bibitem{DELUCA2000687}
A.~{De Luca}, G.~Oriolo, and M.~Vendittelli, ``Stabilization of the unicycle via dynamic feedback linearization,'' \emph{{IFAC} Symp. on Robot Control}, vol.~33, no.~9, pp. 687--692, 2000.

\bibitem{Isidori1995}
A.~Isidori, ``Elementary theory of nonlinear feedback for multi-input multi-output systems.''\hskip 1em plus 0.5em minus 0.4em\relax Springer, 1995, pp. 219--291.

\bibitem{Liberzon2003}
D.~Liberzon, \emph{Switching in Systems and Control}, 1st~ed., ser. Systems \& Control: Foundations \& Applications.\hskip 1em plus 0.5em minus 0.4em\relax Boston, MA: Birkh{\"a}user Boston, 2003.

\bibitem{survey_hybrid}
F.~Zhu and P.~J. Antsaklis, ``Optimal control of hybrid switched systems: A brief survey,'' \emph{Discrete Event Dynamic Systems}, vol.~25, no.~3, pp. 345--364, 2015.

\end{thebibliography}

\end{document}